\documentclass[preprint, 12pt]{elsarticle}

\usepackage[utf8]{inputenc}
\usepackage[T1]{fontenc}
\usepackage{lmodern}
\usepackage{microtype}
\usepackage{booktabs}
\usepackage{tabularx}
\usepackage{array}
\usepackage{xcolor}
\usepackage{enumitem}
\usepackage{amsmath,amssymb,amsthm}
\usepackage{url}
\usepackage{tikz}
\usetikzlibrary{arrows.meta, positioning, calc}
\usepackage{listings}
\usepackage{placeins}                       
\usepackage[hidelinks]{hyperref}            

\newcolumntype{Y}{>{\raggedright\arraybackslash}X}

\newtheorem{lemma}{Lemma}
\newtheorem{corollary}{Corollary}
\newcommand{\pcode}[1]{\texttt{\small #1}}

\newcommand{\Mep}{\ensuremath{M^{\mathrm{ep}}_{A}}}
\newcommand{\Msem}{\ensuremath{M^{\mathrm{sem}}_{A}}}

\journal{Information Sciences}

\begin{document}

\begin{frontmatter}

\title{Episodic-to-Semantic Consolidation \\
Without Identity Drift}

\author[hit-sw]{Xue~Qin}
\ead{qinxue@me.com}

\author[hit-cs]{Simin~Luan}
\ead{luansiminiot@gmail.com}

\author[soochow]{Cong~Yang\corref{cor1}}
\ead{cong.yang@suda.edu.cn}

\author[hit-cs]{Zhijun~Li\corref{cor1}}
\ead{lizhijun\_os@hit.edu.cn}

\cortext[cor1]{Corresponding authors.}

\affiliation[hit-sw]{organization={School of Software, Harbin Institute of Technology},
  city={Harbin}, country={China}}
\affiliation[hit-cs]{organization={School of Computer Science and Technology, Harbin Institute of Technology},
  city={Harbin}, country={China}}
\affiliation[soochow]{organization={School of Future Science and Engineering, Soochow University},
  city={Suzhou}, country={China}}

\begin{abstract}
Long-running adaptive intelligent agents face a structural tension
between knowledge consolidation and information integrity. Memory
consolidation is conventionally treated as an agent-changing
operation: a model is fine-tuned, a prompt rewritten, a policy
distilled, or a reflection appended to the context that governs
future behaviour. In regulated autonomic deployment this is a
liability because the agent operates under commitments and audit
contracts that bind to a specific, cryptographically certified
identity. We propose to treat consolidation not as a mutation of
the planner or the identity manifest, but as a deterministic
function $f: \Mep \to \Msem$ from the agent's episodic event log
$\Mep$ (an append-only record of past interactions) to a
separately addressable semantic knowledge layer $\Msem$
(queryable, consolidated facts derived from that log); the
identity hash does not read $\Msem$, so consolidation updates
knowledge without changing the agent's certified identity. We give a formal
account of the agent representation, prove identity invariance
through a structural lemma on the manifest's hash-input set,
specify a deterministic aggregation algorithm whose outputs are
auditable database rows with explicit confidence and supporting-
event provenance, and validate the construction with synthetic
experiments demonstrating per-field correctness, byte-equal
identity across consolidation passes, and a mean $79.8\%$
reduction in unproductive planner attempts (95\% BCa CI
$[78.0\%, 81.5\%]$ across $10$ seeds) against a calibrated
Bayesian-shrunk baseline. The construction is a knowledge-update discipline
for autonomic agents in which lessons accumulate as queryable
facts while the agent's certified identity remains byte-equal
across its operational lifetime, with an embodied service agent
as the running case study.
\end{abstract}

\begin{keyword}
Episodic Memory \sep Semantic Consolidation \sep
Identity Invariance \sep Cryptographic Provenance \sep
Long-Running Embodied Agents \sep Information~Integrity
\end{keyword}

\end{frontmatter}

\section{Introduction}\label{sec:intro}

Long-running adaptive intelligent agents are increasingly deployed
in regulated environments where two demands collide. From an
adaptive-systems standpoint, the agent must calibrate to its
lived experience: per-task confidence must update with empirical
outcomes, risk posture must respond to incident history, and
operator-facing parameters must track ground-truth task
difficulty. From an information-integrity standpoint, the same
agent must present a stable, cryptographically certifiable
identity to compliance auditors, cross-deployment migration
tooling, and the downstream cyber-physical infrastructure that
consumes the agent's outputs. Embodied service agents in
hospitals, manufacturing cells, and warehouse logistics are the
running case: each is operated under commitments and audit
contracts that bind to a specific certified identity. If
knowledge consolidation silently mutates the agent, every
consolidation pass becomes a re-certification event. Either the
certificate is updated continuously (which compliance paperwork
cannot sustain) or the consolidation is treated as drift (which
the operator cannot~tolerate).

The tension is the standard expression of the stability-plasticity
problem under autonomic governance, but with an extra constraint
absent from the classical formulation: the agent's identity is not
defined by its weights, its prompt, or its current context window;
it is defined by a cryptographic digest over a separately
maintained manifest. Knowledge consolidation that changes the
weights, the prompt, or the in-context summary therefore breaks
identity integrity even when behaviour improvement is the
intended outcome. The conventional CL toolbox (parameter-side
regularisers, gradient projection, replay, reflection-driven
prompt rewrites) all bind the learned signal to the artefact whose
hash defines identity. None of them, taken as-is, preserves the
cryptographic certificate while still letting the agent become
operationally smarter from lived experience.

\begin{figure*}[!t]
\centering
\resizebox{\linewidth}{!}{%
\definecolor{t10blue}{HTML}{1F77B4}
\definecolor{t10orange}{HTML}{FF7F0E}
\definecolor{t10green}{HTML}{2CA02C}
\definecolor{t10red}{HTML}{D62728}
\definecolor{t10purple}{HTML}{9467BD}
\definecolor{t10brown}{HTML}{8C564B}
\begin{tikzpicture}[
  font=\footnotesize,
  box/.style={draw, rounded corners=2pt, minimum height=8mm,
              minimum width=20mm, align=center, inner sep=4pt,
              fill=white, line width=0.4mm},
  state/.style={box, draw=t10blue, text=black},
  agent/.style={box, draw=t10purple, text=black},
  derived/.style={box, draw=t10green, text=black},
  drift/.style={box, draw=t10red, line width=0.5mm, text=black},
  planner/.style={box, draw=t10purple, text=black},
  flow/.style={-{Latex[length=2mm]}, line width=0.6pt},
  readonly/.style={-{Latex[length=2mm]}, dashed, line width=0.6pt},
  hash/.style={font=\scriptsize\ttfamily}
]

\begin{scope}[xshift=0cm]
  \node[anchor=south] at (3.2, 4.8) {\textbf{(a) Classical consolidation}};
  \node[anchor=south, font=\scriptsize\itshape, text=red!60!black]
       at (3.2, 4.55) {identity mutates with knowledge};

  \node[state]   (epA)  at (0.5, 2.8) {Episodic\\events $\Mep$};
  \node[box, draw=t10orange] (fA) at (3.2, 2.8) {fine-tune\\reflect / distill};
  \node[drift]   (agA') at (5.9, 2.8) {Agent$'$\\{\scriptsize\ttfamily $h' \neq h$}};
  \node[agent, dashed] (agA) at (5.9, 1.2) {Agent\\{\scriptsize\ttfamily $h$}};

  \draw[flow] (epA) -- (fA);
  \draw[flow] (fA) -- (agA');
  \draw[flow, dashed, gray] (agA) -- node[right, font=\scriptsize, text=gray]
        {\textit{replaced}} (agA');

  \node[hash] at (5.9, 0.35) {$h \to h'$};
  \node[font=\scriptsize, text=red!70!black]
       at (5.9, 0.05) {certificate re-event};
\end{scope}

\draw[gray!50, dashed, line width=0.4pt] (7.5, 0.0) -- (7.5, 4.8);

\begin{scope}[xshift=8cm]
  \node[anchor=south] at (3.2, 4.8) {\textbf{(b) Consolidation as derived layer}};
  \node[anchor=south, font=\scriptsize\itshape, text=green!40!black]
       at (3.2, 4.55) {identity byte-equal across updates};

  \node[agent]   (manB) at (0.5, 3.8) {Manifest $M$};
  \node[hash, anchor=west, text=green!30!black] at (1.6, 3.8) {$h=\mathtt{SHA256}(M)$};

  \node[state]   (epB)  at (0.5, 2.8) {Episodic\\events $\Mep$};
  \node[box, draw=t10orange] (fB) at (3.2, 2.8) {$f(\cdot)$\\deterministic};
  \node[derived] (semB) at (5.9, 2.8) {Semantic\\$\Msem$};
  \node[planner] (plB) at (5.9, 1.0) {Planner};

  \draw[flow] (epB) -- (fB);
  \draw[flow] (fB) -- (semB);
  \draw[readonly] (semB) -- node[right, font=\scriptsize] {read-only} (plB);

  \node[font=\scriptsize, anchor=west, text=green!40!black]
       at (0.0, 0.05) {$M$ unchanged $\Rightarrow h=h$};
\end{scope}

\end{tikzpicture}%
}
\caption{Two stances on memory consolidation in regulated embodied
deployment. \textbf{(a)} Classical consolidation binds the learned
output to the agent itself: fine-tuning, reflection, distillation,
or in-context-memory rewrites all change the artefact whose hash
defines the certified identity. Every consolidation pass becomes a
re-certification event. \textbf{(b)} The construction in this paper
separates the manifest $M$ (whose hash $h$ binds the certificate)
from a derived semantic layer $\Msem = f(\Mep)$. The hash inputs
exclude $\Msem$ by construction (Lemma~\ref{lem:invariance}); the
planner reads $\Msem$ through a read-only surface and never
mutates the manifest. The robot accumulates operational knowledge
without moving its identity certificate.}
\label{fig:hero}
\end{figure*}

This paper asks: can an adaptive intelligent agent consolidate
operational knowledge from episodic experience while its identity
certificate remains byte-equal? Our answer, which we refer to as
\emph{identity-stable consolidation}, treats consolidation not as
a mutation of the planner or the identity manifest, but as a
deterministic function over episodic memory whose output is a
separately addressable semantic knowledge layer \Msem{} which
downstream planners query for grounding facts. The approach
factors memory into two stores (an episodic event log \Mep{}
and a derived semantic layer \Msem{}) and constrains
consolidation to write only the semantic store, leaving the
manifest's hash-input set untouched. The identity hash therefore
does not read this layer, so consolidation updates knowledge
without changing who the agent is in any sense the auditor, the
insurer, or the operator can~detect.

The setting is an autonomic-governance substrate for long-running
deployed agents. We assume an underlying architecture that
separates an immutable identity certificate (a manifest hashed
into a single fixed-length value) from the operational
scaffolding the agent uses moment-to-moment, and persists
episodic events under that identity in an append-only store
partitioned by identity hash. Such substrates are consistent with
prior work on identity-orthogonal memory layers for long-running
adaptive agents under autonomic governance. We treat the
existence of the manifest hash and the episodic store as
substrate; this paper's contribution is the knowledge-consolidation
layer above them, and the proof that this layer can update
without moving the cryptographic identity.

\paragraph{Motivating example} Suppose a robot's
\pcode{manipulation.grasp} capability fails on glass cups at
$15$~N three times and succeeds at $25$~N twelve times across one
week of operation. A robot that merely stores these events has
memory but no operational knowledge: each future grasp begins
from the same default. A robot that consolidates can answer a
future planner query with a fact such as ``for \pcode{glass\_cup}
under this environment, recommended grasp force is $25$~N with
confidence $0.83$ over $15$ observations.'' That fact
is not a policy rewrite, not a manifest mutation, and not a
fine-tuning gradient. It is a derived row in a semantic table,
computed from the episodic rows by a deterministic rule, and
recoverable byte-for-byte from the same inputs. A reviewer can
ask why the recommendation is $25$~N and walk back to the
supporting events. Figure~\ref{fig:worked-example} walks through
this same example end-to-end, including the manifest hash check
that completes once consolidation is done.

\begin{figure*}[!t]
  \centering
  \includegraphics[width=\linewidth]{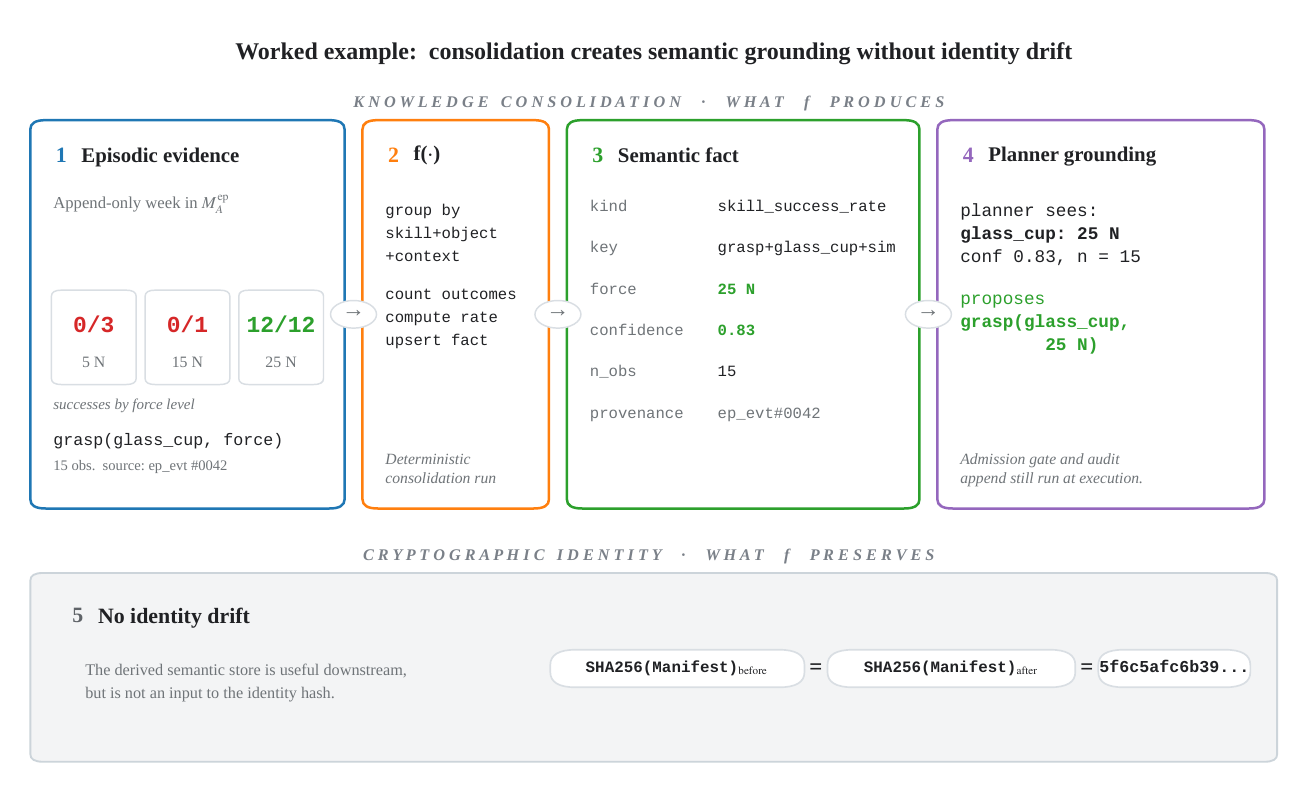}
  \caption{End-to-end worked example. The top row
    (\emph{knowledge consolidation}) walks through what $f(\cdot)$
    produces; the bottom band (\emph{cryptographic identity})
    states what $f(\cdot)$ preserves.
    \textbf{(1)} Episodic evidence aggregates a week of
    \pcode{grasp(glass\_cup, force)} attempts by force level:
    zero successes at $5$~N (three tries), zero at $15$~N
    (one try), and twelve successes at $25$~N out of twelve.
    \textbf{(2)} The deterministic consolidation function
    $f(\cdot)$ (\S\ref{sec:v1}) groups the rows by
    \pcode{skill+object+context}, counts outcomes, computes the
    success rate, and upserts a single semantic fact.
    \textbf{(3)} The output is one row in the
    \pcode{semantic\_facts} table, carrying the recommended force,
    a confidence value, the observation count, and a
    \pcode{provenance} pointer back to the supporting episodic event.
    \textbf{(4)} A downstream planner queries the read-only
    semantic endpoint, grounds its next intent in the consolidated
    fact, and emits a structured action that the runtime gates
    apply policy and audit on.
    \textbf{(5)} The manifest hash is recomputed before and after
    the consolidation pass and is byte-equal
    (Lemma~\ref{lem:invariance}): $M^{\mathrm{sem}}_A$ is not a
    hash input, so updating the semantic store cannot move the
    identity certificate.}
  \label{fig:worked-example}
\end{figure*}

\paragraph{Framework overview} Identity-stable consolidation is
realised through four elements that together constitute the
framework developed in the remainder of the paper.
\textbf{(i)} A formal model of agent state as a tuple
$(M, \Mep, \Msem, \pi)$ that explicitly separates the identity
manifest $M$ (the bytes hashed into the identity certificate)
from the operational substrate the agent uses moment to moment
(\S\ref{sec:formal}). \textbf{(ii)} A structural identity-invariance
result (Lemma~\ref{lem:invariance}) that excludes the semantic
layer \Msem{} from the manifest's hash-input set by construction
rather than by runtime assertion. \textbf{(iii)} A deterministic v1
consolidation algorithm $f: \Mep \to \Msem$ that materialises the
semantic store as auditable database rows with explicit confidence,
observation counts, and supporting-event provenance
(\S\ref{sec:v1}). \textbf{(iv)} A read-only planner interface
(\S\ref{sec:planner}) that exposes \Msem{} as grounding facts to
the downstream planner without giving the planner write access to
the manifest or the semantic store. The remainder of the paper
develops each element in turn and then reports empirical
validation (\S\ref{sec:eval}).

\paragraph{Contributions} The paper's contribution is formal,
architectural, and empirical, and is positioned as an
information-integrity discipline for adaptive intelligent agents
under autonomic governance.
\begin{enumerate}
\item \textbf{A structural identity-invariance result.} We define
semantic memory as $\Msem = f(\Mep)$ and prove, via input-set
inspection of the manifest's hash
(Lemma~\ref{lem:invariance}, Corollary~\ref{cor:iterated}), that
the cryptographic identity hash is byte-equal across arbitrary
sequences of consolidation passes, by construction rather than by
runtime check. The result follows from the type signature of
$f$ and the field set of the manifest dataclass, not from a
runtime assertion.
\item \textbf{A deterministic, auditable consolidation algorithm.}
We specify a v1 consolidation pass whose outputs are plain
database rows with explicit confidence, observation counters, and
supporting-event provenance, enabling end-to-end traceability from
any planner-consulted fact back to the supporting episodic
evidence. The pass is idempotent, order-invariant, and
recoverable byte-for-byte from the same inputs under the same
rule version.
\item \textbf{An empirical validation under three controls.} We
ablate the contribution of consolidation structure against
no-memory, uniform-confidence, and calibrated Bayesian-shrunk
controls on a 1000-decision synthetic benchmark across $10$
seeds (Table~\ref{tab:v3-controls}). The calibrated control
retires the straw-man critique of the no-memory baseline: a
mean $79.82\%$ reduction in unproductive planner attempts
(95\% BCa CI $[78.02\%, 81.49\%]$) holds against a held-out
Bayesian-shrunk confidence estimate, while the cryptographic
identity hash remains byte-equal across all consolidation passes
(verified through SHA-256 recomputation, see V2 fixture).
\end{enumerate}

An LLM-assisted v2 is sketched only as future work
(\S\ref{sec:v2}) because it introduces non-determinism that
complicates audit; deploying it before v1 stabilises would
inherit the drift problem we are trying to avoid. Planner
integration is realised through a read-only query interface
(\S\ref{sec:planner}) that exposes \Msem{} as grounding facts
without giving the planner write access to the semantic store or
the manifest.

\section{Related Work}\label{sec:related}

We position our contribution against five research currents, led
by autonomic-computing and adaptive-systems work that frames the
broader problem, and followed by the more specialised
sub-currents whose mechanisms our construction borrows from or
contrasts with.

\subsection{Autonomic Computing and Adaptive Intelligent Agents}
\label{sec:related:autonomic}

The autonomic-computing tradition frames the central problem of
long-running deployed systems as continuous self-adaptation:
agents monitor their operating context, analyse it against
declared objectives, plan corrective action, and execute the
plan, with knowledge accumulated across iterations of this
loop. The MAPE-K reference architecture
\citep{kephart-2003-autonomic} is the canonical realisation, and
the broader family of self-adaptive systems work has elaborated
it into a programme on architectural support for runtime
adaptation under explicit goals
\citep{garlan-2004-rainbow,cheng-2009-self-adaptive-roadmap}.
Within this tradition, the central methodological question is
how knowledge accumulated by the autonomic loop is represented,
updated, and shared between the loop's monitoring and execution
phases without compromising the system's externally visible
contracts.

Recent work in \emph{Information Sciences} has elaborated this
question across several axes that touch our construction
directly. \citet{wu-2023-emotional-belief} studies adaptive-belief
update in agent-based negotiation; \citet{tan-2024-q-learning-update}
develops heterogeneous update strategies for reinforcement-learning
agents; \citet{geng-2024-data-knowledge-belief-rule} combines
data-driven and knowledge-driven belief-rule learning;
\citet{zhang-2024-knowledge-comm} introduces knowledge-guided
coordination between cooperative multi-agent systems; and
\citet{wang-2025-fuzzy-knowledge} formalises fuzzy-knowledge
inference for dynamic task allocation. These works share an
implicit substrate assumption: the knowledge surface (be it a
belief table, a Q-function, or a belief-rule base) is also the
substrate whose update defines the agent. Our construction
parts from this assumption by separating the knowledge surface
that updates (semantic memory \Msem{}) from the substrate that
defines the agent's externally visible identity (the manifest
hashed into the identity certificate).

Modern realisations of the autonomic loop in the LLM-agent space
inherit its shape while differing in substrate: reflection-based
language agents such as \citet{reflexion} and
\citet{generative-agents} store verbal feedback and feed it
into later episodes; \citet{voyager} builds a persistent skill
library through trial; \citet{memgpt} manages a memory hierarchy
under context pressure; \citet{a-mem} treats memory itself as
agentic, with the agent deciding what to record and how to index
it. \citet{sumers-2023-coala} frames these designs as cognitive
architectures for language agents, mapping each onto memory,
decision-making, and learning loops. Across this body of work
the artefact that consolidates is also the artefact that is
deployed: the same parameter set, prompt context, or skill
library that produces behaviour is the one that learns. The
property is operationally effective but structurally
incompatible with an identity certificate that must remain
byte-equal across the adaptive loop. The present paper preserves
the autonomic loop's substantive content (continuous knowledge
update from lived experience) while inverting the bind:
consolidation produces a separately addressable artefact whose
update does not touch the agent's cryptographic identity. The
separation can be stated information-theoretically: by a
mutual-information argument
\citep{cover-2006-information-theory}, the identity hash,
computed only from the manifest's hash-input bytes, carries zero
information about the bytes of \Msem{}; therefore arbitrary
updates to \Msem{} cannot influence the hash. The formal account
in \S\ref{sec:formal} makes this argument~structural.

\subsection{Knowledge Representation and the Episodic-Semantic Distinction}
\label{sec:related:kr}

The vocabulary of knowledge representation and reasoning
\citep{brachman-2004-knowledge-representation} supplies the
formal substrate within which the episodic-semantic distinction
can be precisely stated. The distinction itself descends from
\citet{tulving-1972}: episodic memory holds traces of specific
events; semantic memory holds structured knowledge about the
world. \citet{squire-1992} argues that declarative memory in
mammals decomposes along the same axis, with the hippocampus
mediating event traces and the cortex holding consolidated
knowledge. Complementary learning systems theory
\citep{mcclelland-1995,kumaran-2016} develops this into an
explicit two-store model: a fast-learning hippocampal system
encodes individual episodes and a slower neocortical system
extracts statistical regularities, with sleep-time replay
\citep{wilson-mcnaughton-1994} as the canonical biological
mechanism by which knowledge moves between the two. These
traditions motivate the distinction between event logs and
extracted knowledge, and they motivate the design choice of
representing extracted knowledge in a separate substrate from
the event log itself. They do not address an engineered identity
certificate because the cognitive-science agent is the brain
and there is no separate ``identity hash'' to preserve. Our
construction inherits the two-store decomposition but re-anchors
it in an information-integrity setting: the identity-bearing
substrate is the manifest's hash, and the consolidation
function's correctness target is the byte-equality of that hash
rather than any neural-fidelity claim.

\subsection{Continual Learning and Replay-Based Consolidation}
\label{sec:related:cl}

Reinforcement learning uses replay buffers, offline datasets,
and policy updates to turn experience into better
behaviour~\citep{mnih-2015,levine-2020}. The continual-learning
literature has elaborated this into a family of mechanisms for
acquiring new knowledge without overwriting old: parameter-side
regularisers such as EWC \citep{kirkpatrick-2017-ewc} and
distillation-based variants such as LwF \citep{li-2018-lwf};
gradient-projection methods such as GEM
\citep{lopez-paz-2017-gem} and its averaged-gradient extension
A-GEM \citep{chaudhry-2019-agem}; and replay-based baselines
such as CLEAR \citep{rolnick-2019-clear} and DER
\citep{buzzega-2020-der}. \citet{vandeven-2022} surveys the
incremental-learning design space; \citet{parisi-2019} reviews
lifelong learning more broadly. Across all of this work the
agent is normally identified with its learned policy. A
consolidation step that updates policy parameters changes the
agent in the relevant sense. The property is acceptable when the
research object is performance improvement; it is less
acceptable when an autonomic deployment requires stable identity
across time. The underlying assumption that the agent \emph{is}
its weights does not hold in our setting, where a deployed
agent is the combination of a frozen identity manifest, a
swappable planner, an audited capability registry, and the
persistent memory layers studied here.

\subsection{Memory Architectures for Language and Embodied LLM Agents}
\label{sec:related:llm-agents}

A related current focuses specifically on how language-model
agents retain and reuse information across episodes.
\citet{reflexion} stores verbal feedback and feeds it into later
episodes; \citet{generative-agents} use memory retrieval and
reflection to produce higher-level summaries;
\citet{voyager} builds a persistent skill library;
\citet{memgpt} manages a memory hierarchy under context
pressure; \citet{zhong-2024-memorybank} attaches a forgetting
curve so older, less-rehearsed memories decay; \citet{a-mem}
treats memory itself as agentic, with the agent deciding what to
record and how to index it. For embodied agents,
\citet{saycan} grounds language-model intent in robotic
affordances by composing skills the runtime can actually execute.
These mechanisms all bind the consolidated output to the engine
that produced it: a different model, prompt, or policy after
consolidation is, operationally, a different agent. Our
construction inverts the bind: consolidation produces a
separately addressable artefact whose update does not change the
agent's certificate.

\subsection{Intention-Centred Agent Architectures}
\label{sec:related:bdi}

The BDI tradition \citep{bratman-1987,rao-georgeff-1995-bdi}
factors an agent's mental state into beliefs, desires, and
intentions, and pins observable behaviour to the intentions an
agent is currently committed to. The identity certificate
introduced here plays a structurally similar role: it freezes a
specific set of commitments (manifest, policy version, audited
capability set) and forces any update path to be visible to the
operator. The present paper differs from BDI by routing the
commitment through a hashed identity manifest rather than
through an explicit intention-revision protocol, and by treating
semantic memory as identity-orthogonal rather than as a
derivable view over current beliefs.

\subsection{Identity-Orthogonal Memory}
\label{sec:related:identity-orth}

Closest in spirit is a line of work on persistent episodic
memory that separates the identity manifest from the episodic
event log so that logging events does not alter the agent's
certificate. The present paper extends that programme by adding
a \emph{derived} memory layer (semantic facts) whose update is
likewise excluded from the identity hash by construction, and by
proving the exclusion structurally rather than asserting it as a
runtime invariant.

\section{Method}\label{sec:method}

This section develops identity-stable consolidation in four parts.
\S\ref{sec:formal} formalises the agent state, proves the
structural identity-invariance result
(Lemma~\ref{lem:invariance}), and gives a high-level walkthrough
of the four framework elements introduced in \S\ref{sec:intro}.
\S\ref{sec:v1} specifies the deterministic consolidation
algorithm, its determinism and idempotence properties, and the
audit chain that keeps every semantic row traceable to supporting
evidence. \S\ref{sec:v2} discusses an LLM-assisted extension as a
future direction. \S\ref{sec:planner} describes the read-only
planner integration interface.

\subsection{Framework}\label{sec:formal}

\subsubsection{Architecture Overview}\label{sec:formal-overview}

This section presents identity-stable consolidation as four
interacting elements before turning to the formal account that pins
down the identity-invariance guarantee. The four elements were
named informally in \S\ref{sec:intro} and visualised as the
derived-layer construction in Figure~\ref{fig:hero}(b); here we
describe how they fit together and where the remainder of the
paper develops each one.

\noindent\textbf{(i) Manifest, episodic store, and semantic store.}
The agent state separates an identity manifest~$M$ from two
operational substrates: an episodic event log $\Mep$ that
accumulates timestamped observations append-only, and a semantic
store $\Msem$ that materialises consolidated facts derived from
$\Mep$. The identity hash $h = \mathrm{SHA256}(M)$ binds the
agent's cryptographic certificate to a fixed-field manifest
dataclass, so growth in $\Mep$ or $\Msem$ has no effect on $h$ by
construction. Section~\ref{sec:formal-state} formalises the state
tuple and the canonical serialisation rules.

\noindent\textbf{(ii) Structural identity invariance.} The first
guarantee we prove is that the consolidation operation
$f : \Mep \to \Msem$ cannot move $h$. The proof is structural
rather than runtime: inspection of the manifest dataclass's field
set shows that $\Msem$ is not a hash input, so any $f$ writing only
to the semantic store leaves the canonical serialisation of $M$
unchanged (Lemma~\ref{lem:invariance},
Corollary~\ref{cor:iterated}). Section~\ref{sec:formal-invariance}
states the lemma and the input-set argument that supports it.

\noindent\textbf{(iii) Deterministic v1 consolidation.} The first
concrete realisation of $f$ is a deterministic SQL-style aggregation
pass that turns structured episodic payloads into semantic rows
with explicit confidence values, observation counts, and
supporting-event provenance. The pass is idempotent and
order-invariant; the same episodic state under the same rule
version produces a byte-equal semantic snapshot on every re-run.
Section~\ref{sec:v1} specifies the schema, the algorithm, and the
audit chain through which every semantic row remains traceable to
its supporting events. An LLM-assisted v2 is discussed as future
work (\S\ref{sec:v2}).

\noindent\textbf{(iv) Read-only planner interface.} A downstream
planner consumes $\Msem$ through a read-only HTTP endpoint that
surfaces top-ranked facts as grounding for the planner's prompt.
The interface deliberately does not give the planner write access
to $\Msem$ or $M$; semantic memory is an information surface, not
an authority surface. The interface specification, including
auditability of fact-consultation, is in
Section~\ref{sec:planner}.

With these four elements named, the remainder of this section
gives the formal state tuple (\S\ref{sec:formal-state}) and the
identity-invariance lemma (\S\ref{sec:formal-invariance}) on which
the rest of the paper builds.

\subsubsection{Agent State}\label{sec:formal-state}

We model the agent state as a tuple $(M, \Mep, \Msem, \pi)$ where
$M$ is the identity manifest (a frozen dataclass with a fixed,
finite set of fields), \Mep{} is the episodic event log (an
append-only sequence of timestamped events partitioned by
$\mathrm{SHA256}(M)$), \Msem{} is the derived semantic memory
table, and $\pi$ is the planner. The identity hash is
$h = \mathrm{SHA256}(M)$. The consolidation operation is
$\Msem{} \leftarrow f(\Mep{})$, a deterministic function.

\subsubsection{Structural Identity Invariance}\label{sec:formal-invariance}

The construction's correctness rests on a structural property of
the manifest's hash-input set rather than on a runtime check.
Concretely, we identify a class of operations that, by their
signature alone, cannot modify the manifest and therefore cannot
change the identity hash.

\begin{lemma}[Structural identity invariance]
\label{lem:invariance}
Let $\mathcal{F}_{\mathrm{sem}}$ denote the class of operations
$f : \Mep \to \Msem$ whose only write target is the semantic
store. For every $f \in \mathcal{F}_{\mathrm{sem}}$ and every
well-formed agent state $(M, \Mep, \Msem, \pi)$, applying $f$
leaves $M$ syntactically unchanged. Consequently, by determinism
of $\mathrm{SHA256}(\cdot)$ over canonical serialisations,
$h = \mathrm{SHA256}(M)$ is byte-equal before and after applying
any $f \in \mathcal{F}_{\mathrm{sem}}$.
\end{lemma}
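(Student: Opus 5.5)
The plan is to argue purely from the write-set of $f$ and the input-set of the hash; no property of $f$'s aggregation logic is needed. I will model the agent state as a product of four disjoint storage locations, $\mathcal{S} = \mathcal{S}_M \times \mathcal{S}_{\mathrm{ep}} \times \mathcal{S}_{\mathrm{sem}} \times \mathcal{S}_\pi$. Applying an operation $f$ is then modelled as a state transformer $\hat f : \mathcal{S} \to \mathcal{S}$. The hypothesis $f \in \mathcal{F}_{\mathrm{sem}}$ says exactly that $\hat f$ factors as $\hat f(M, \Mep, \Msem, \pi) = (M, \Mep, f(\Mep), \pi)$, since its only write target is the semantic component. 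The first step is just to make this factorisation explicit: the projection onto the $M$ coordinate commutes with $\hat f$, i.e.\ $\mathrm{pr}_M \circ \hat f = \mathrm{pr}_M$. That is the claim that $M$ is left syntactically unchanged.

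The second step handles the hash. I will write $h = \mathrm{SHA256}(\mathrm{ser}(M))$, where $\mathrm{ser}$ is the canonical serialisation of \S\ref{sec:formal-state}. Because $M$ is a frozen dataclass with a fixed, finite field set, $\mathrm{ser}$ reads only those fields. So $h$ is a function of $\mathrm{pr}_M$ alone: $h = H \circ \mathrm{pr}_M$ with $H = \mathrm{SHA256}\circ\mathrm{ser}$. Composing with step one gives $H \circ \mathrm{pr}_M \circ \hat f = H \circ \mathrm{pr}_M$. Since $\mathrm{ser}$ and SHA-256 are deterministic functions, equal arguments yield byte-equal outputs, and the lemma follows. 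Collision resistance is never invoked; only functionality (determinism) of the hash is used.

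The main obstacle is making ``$\Msem$ is not a hash input'' genuinely structural rather than an informal assertion. Two things must be ruled out. First, no field of $M$ may hold a reference or alias into $\Msem$ (or $\Mep$); otherwise writing $\Msem$ could change what $\mathrm{ser}(M)$ reads. Second, $\mathrm{ser}$ must not consult ambient state such as time or storage contents. I would handle both by stipulating, as part of ``well-formed agent state,'' that every field of $M$ is a value type that is immutable and non-referential, and that $\mathrm{ser}$ is a pure function of those field values. With that stipulation, disjointness of storage locations follows, and the factorisation in step one is sound. I would state this well-formedness condition explicitly in the proof, since the lemma is only as strong as it.
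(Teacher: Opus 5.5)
Your proposal is correct and is essentially the paper's own input-set argument. The write footprint of $f$ is disjoint from the manifest's hash-input set, so the canonical serialisation of $M$ is unchanged, and determinism of SHA-256 (no collision resistance needed) gives byte-equality. Your explicit well-formedness condition (immutable, non-referential field values and a pure serialiser) makes precise, and slightly tightens, what the paper gets by ``inspection of the manifest dataclass''; the tightening matters because a frozen dataclass is only shallowly immutable.
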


\begin{proof}[Proof (by input-set inspection)]
The hash $h$ is a deterministic digest whose input is the
canonical serialisation of $M$'s fields. Inspection of the
manifest dataclass declared in the runtime confirms
that the field set is fixed at type-design time and does not
include any reference to $\Msem$. Operations in
$\mathcal{F}_{\mathrm{sem}}$ have $\Msem$ in their write footprint
and $M$ outside it; their effect on the agent state is therefore
disjoint from the hash-input set, and the canonical serialisation
of $M$ is unchanged. SHA-256 over the same byte string yields the
same digest. The result is structural: it follows from the type
signature of $f$ and the field set of $M$, not from a runtime
assertion. Figure~\ref{fig:hash-inputs} visualises the input set
and the structural exclusion that the proof rests on.
\end{proof}

\begin{figure}[!ht]
  \centering
  \includegraphics[width=\linewidth]{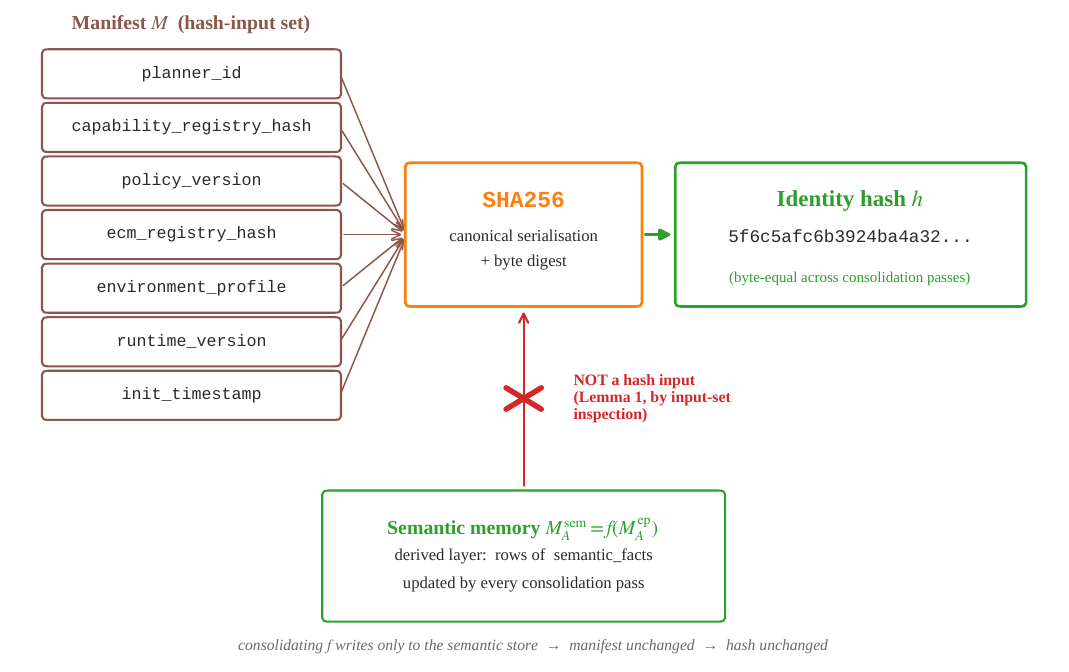}
  \caption{Lemma~\ref{lem:invariance} visualised. The seven
    manifest fields on the left are the canonical inputs to the
    SHA-256 digest that produces the identity hash $h$. The
    derived semantic store $M^{\mathrm{sem}}_A = f(M^{\mathrm{ep}}_A)$
    sits outside this input set by manifest field-set construction
    (it is not declared as a manifest field, so no canonical
    serialisation of $M$ ever reads its bytes). The invariance is
    therefore structural and compile-time: any operation
    $f \in \mathcal{F}_{\mathrm{sem}}$ writes only to the semantic
    store and cannot influence the bytes hashed.}
  \label{fig:hash-inputs}
\end{figure}

\begin{corollary}[Iterated invariance]
\label{cor:iterated}
For any finite sequence $f_1, f_2, \dots, f_k \in
\mathcal{F}_{\mathrm{sem}}$ applied in any order, the identity
hash remains byte-equal to its initial~value.
\end{corollary}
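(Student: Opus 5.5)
The plan is to prove Corollary~\ref{cor:iterated} by induction on the sequence length $k$, using Lemma~\ref{lem:invariance} as the inductive step. Write the initial agent state as $S_0 = (M_0, \Mep^{(0)}, \Msem^{(0)}, \pi_0)$. For $i = 1, \dots, k$ define $S_i$ as the state obtained by applying $f_{\sigma(i)}$ to $S_{i-1}$, where $\sigma$ is an arbitrary permutation, or more generally any ordering, of the given operations. This covers the phrase ``in any order'' without extra work. The claim to establish is the invariant $M_i = M_0$ syntactically for every $i$. Then $\mathrm{SHA256}(M_i) = \mathrm{SHA256}(M_0) = h$ follows from determinism of SHA-256 over the canonical serialisation.

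The base case $k = 0$ is trivial. For the inductive step, assume $M_{i-1} = M_0$. Applying Lemma~\ref{lem:invariance} to the state $S_{i-1}$ and the operation $f_{\sigma(i)} \in \mathcal{F}_{\mathrm{sem}}$ gives $M_i = M_{i-1}$, hence $M_i = M_0$. Note that the invariant is stated on $M$ itself rather than on $h$. This lets the induction rely only on syntactic equality of the manifest, and it avoids any need to reason about hash collisions or about injectivity of SHA-256, since we only use that SHA-256 is a function.

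The main obstacle is checking that the lemma's hypothesis, a \emph{well-formed} agent state, still holds for each intermediate $S_{i-1}$. The lemma is stated only for well-formed states, and after several passes $\Msem$ (and possibly $\Mep$, which keeps growing append-only) differ from their initial values. I would handle this by proving a closure property alongside the main invariant. Any $f \in \mathcal{F}_{\mathrm{sem}}$ maps well-formed states to well-formed states: it writes only $\Msem$, so $M$ stays a frozen instance of the fixed-field dataclass. The partitioning of $\Mep$ by $\mathrm{SHA256}(M)$ is preserved because $M$, and hence the partition key, is unchanged. The value $\Msem = f(\Mep)$ is, by the typing of $f$, a legitimate semantic table.

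I would then strengthen the inductive hypothesis to ``$S_i$ is well-formed and $M_i = M_0$'' and carry both conjuncts through the induction. If interleaved episodic appends between passes are meant to be allowed, I would observe that such appends also have $M$ outside their write footprint. The same argument then extends to them, though the corollary as stated does not require this.
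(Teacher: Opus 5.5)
Your proposal is correct and is essentially the paper's argument: the paper gives no separate proof of Corollary~\ref{cor:iterated}, treating it as immediate from repeated application of Lemma~\ref{lem:invariance}, and your induction on $k$ with the invariant $M_i = M_0$ (hashing only at the end) simply writes that iteration out. Carrying well-formedness through the induction, and noting that episodic appends such as the \pcode{consolidation\_run} audit event also leave $M$ outside their write footprint, fills in details the paper leaves implicit without changing the route.
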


The lemma is enforced by construction, not by runtime check: the
type of $f$ and the field set of $M$ are fixed at compile time, so
the property cannot be violated by a well-typed implementation.
A regression test (Section~\ref{sec:eval}) verifies the property
holds for the shipped consolidator, but a test failure would
indicate a typing or schema bug, not a violated invariant.

\subsection{Consolidation Algorithm}\label{sec:v1}

We specify the consolidation algorithm in three modules.
\S\ref{sec:v1-schema} defines the schema of the semantic store.
\S\ref{sec:v1-algorithm} specifies the deterministic aggregation
algorithm that populates that schema from the episodic store, the
determinism and idempotence guarantees the algorithm provides, and
the role of confidence in keeping the resulting facts honest.
\S\ref{sec:v1-auditability} describes the audit chain that keeps
every semantic row traceable to its supporting evidence, and the
mechanisms the algorithm deliberately leaves out of scope.

\subsubsection{Schema}\label{sec:v1-schema}

The schema fixes the minimal table that v1 writes into and the key
under which semantic rows are uniquely addressed. The table is
intentionally small:

\begin{lstlisting}[basicstyle=\ttfamily\footnotesize]
CREATE TABLE IF NOT EXISTS semantic_facts (
    id              INTEGER PRIMARY KEY AUTOINCREMENT,
    identity_hash   TEXT    NOT NULL,
    fact_kind       TEXT    NOT NULL,
    fact_key        TEXT    NOT NULL,
    fact_value_json TEXT    NOT NULL,
    last_updated    TEXT    NOT NULL,
    UNIQUE (identity_hash, fact_kind, fact_key)
);
\end{lstlisting}

The table is partitioned by \pcode{identity\_hash}, mirroring the
episodic store. The triple
\pcode{(identity\_hash, fact\_kind, fact\_key)} is unique: exactly
one current fact exists for each stable key, and updates upsert.
The four \pcode{fact\_kind} values that v1 supports in the shipped
module are \pcode{skill\_success\_rate}, \pcode{object\_property},
\pcode{zone\_risk}, and \pcode{interaction\_pattern}. A
\pcode{fact\_key} is a stable, human-readable string such as
\pcode{manipulation.grasp~+~glass\_cup~+~sim\_relaxed}. The
\pcode{fact\_value\_json} blob carries the recommended parameter
(or property value, or failure-pattern descriptor), the number of
supporting observations, the success rate where applicable, an
explicit confidence value, the last supporting episodic event id,
and the consolidation rule version. The \pcode{last\_updated}
column records the consolidation checkpoint at which the row was
last written.

\subsubsection{Algorithm}\label{sec:v1-algorithm}\label{sec:v1-determinism}

The algorithm module reads uncommitted episodic rows past the last
checkpoint, aggregates them into stable-keyed semantic facts under
an explicit confidence value, and guarantees that the resulting
snapshot is reproducible byte-for-byte from the same inputs.

\paragraph{Aggregation pass}
A consolidation pass for one identity hash proceeds in five steps.
(1)~\emph{Read}: select episodic rows whose timestamp is greater
than the last consolidation checkpoint for this identity hash; the
checkpoint is itself recorded as an episodic event of kind
\pcode{consolidation\_run}, so the next pass resumes cleanly.
(2)~\emph{Extract}: project the structured fields already present
in each row's payload (\pcode{skill\_id}, target class, environment
descriptor, parameter values, success/failure flag, failure
reason); rows whose payloads lack enough structure are skipped, as
v1 deliberately does not attempt linguistic abstraction over
free-form summaries.
(3)~\emph{Group}: bucket the records by stable keys; for
\pcode{skill\_success\_rate} the natural key is
\pcode{(skill\_id, target\_class, environment)}, for
\pcode{interaction\_pattern} it is
\pcode{(skill\_id, target\_class, failure\_reason)}.
(4)~\emph{Aggregate}: compute counts, success rates, parameter
distributions (mean and a robust band such as the IQR), and the
most common failure reason per group.
(5)~\emph{Upsert}: write the aggregated value into the semantic
table under the unique key, with an updated \pcode{n\_observations}
counter (cumulative across passes for this key), a
\pcode{confidence} value computed from observation count and
variance, and a pointer to the last contributing episodic event
id; the consolidation rule version is recorded inside
\pcode{fact\_value\_json}. \S\ref{sec:v1-auditability} describes
the audit-append step that closes the pass.

\paragraph{Determinism and idempotence}
By construction, two consecutive consolidation passes against the
same episodic state and the same rule version produce byte-equal
semantic output. The first pass writes; the second pass observes
no new episodic rows past the checkpoint and writes nothing new.
Idempotence under partial restart is also preserved: if a pass
crashes after writing to semantic but before appending the
\pcode{consolidation\_run} event, the next pass reprocesses the
same range, upserts the same values, and \emph{then} appends. No
row leaks, no row double-counts. Order-independence within one
pass follows because the aggregates are commutative-associative:
the order in which episodic rows are extracted does not change the
upsert's final value. Section~\ref{sec:eval} reports a regression
test that shuffles insertion order and asserts byte-equal semantic
output, which the shipped consolidator passes on every fixture.

\paragraph{Confidence}
A fact derived from two observations should not have the same
planner influence as a fact derived from $200$. Without an explicit
confidence column, downstream planners would treat all facts as
equally authoritative, and the system would inherit a brittle
preference for the most recently observed pattern. A minimum
viable confidence rule combines observation count with within-
group variance: more observations and lower variance imply higher
confidence; contradictory outcomes (success rate near $0.5$ with
no clean parameter band) imply lower confidence. The exact formula
matters less than the discipline of making uncertainty visible to
both the planner and the audit~chain.

\subsubsection{Auditability and Scope}\label{sec:v1-auditability}

The auditability discipline pairs the algorithm with a self-contained
audit chain and an explicit list of mechanisms v1 leaves out of
scope. The chain keeps every semantic row traceable to its
supporting evidence; the scope list draws a sharp boundary around
what v1 does and does not contribute.

\paragraph{Audit chain}
After the pass, the runtime appends a \pcode{consolidation\_run}
event to the episodic store with the rule version, the row range
processed (\pcode{first\_processed\_event\_id},
\pcode{last\_processed\_event\_id}), and the count of semantic
rows touched. This keeps the audit trail self-contained: a
reviewer asking ``why does this \pcode{skill\_success\_rate} row
exist'' can walk back through \pcode{last\_supporting\_event\_id}
and the \pcode{consolidation\_run} event whose row range contains
it. Figure~\ref{fig:consolidation-pipeline} summarises the pass
as a five-step pipeline with the audit-append loop.

\begin{figure*}[!ht]
  \centering
  \includegraphics[width=\linewidth]{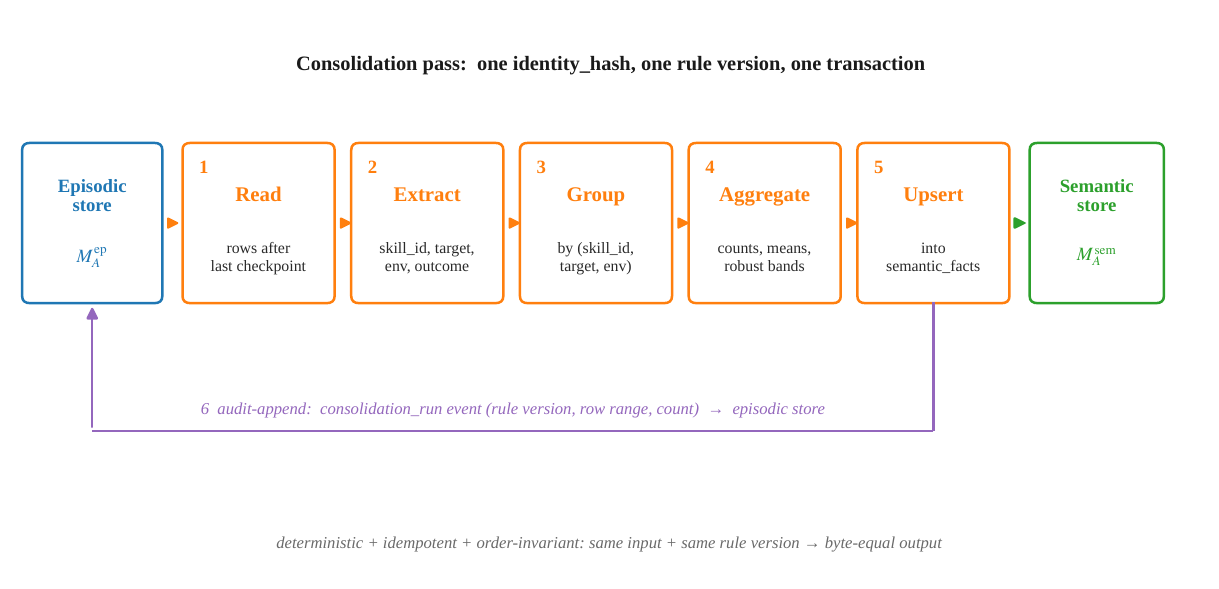}
  \caption{One consolidation pass as a five-step pipeline. Steps
    (1)--(5) read uncommitted episodic rows past the last
    checkpoint, extract structured fields, group by a stable
    fact-key, aggregate counts and parameter bands, and upsert
    the result into \pcode{semantic\_facts}. Step (6) closes the
    audit loop by appending a \pcode{consolidation\_run} event
    back to the episodic store, carrying the rule version, the
    processed row range, and the count of semantic rows touched.
    The pass is deterministic and idempotent: the same episodic
    state under the same rule version produces a byte-equal
    semantic snapshot on every re-run.}
  \label{fig:consolidation-pipeline}
\end{figure*}

\paragraph{Scope of v1}
The deterministic v1 design is not proposed as a complete memory
architecture. It defines a minimal substrate with three properties
that the proof of identity invariance (Section~\ref{sec:formal})
relies on: deterministic execution, a schema whose hash-input
disjointness with respect to $M$ is checkable at type-design
time, and an audit-trace mapping each derived row back to its
supporting episodic events. Several mechanisms that a more
expressive memory system would include are explicitly out of
scope of v1. \emph{Linguistic abstraction over event summaries}:
v1 aggregates over structured fields only; free-form generalisation
of operator notes or failure explanations is not performed.
Section~\ref{sec:v2} discusses an LLM-assisted extension that would
address this and the additional auditability requirements such an
extension introduces. \emph{Decay and forgetting}: v1 records
observation counts and confidence values but does not model temporal
decay; a fact's confidence does not diminish as supporting evidence
ages. The \pcode{n\_observations} field is reduced only indirectly
through capacity-driven pruning of the underlying episodic store.
\emph{Adversarial robustness}: v1 assumes a trusted episodic
substrate. Defences against an adversary capable of writing
poisoned episodic rows are deferred to
Section~\ref{sec:discussion}, which discusses the failure-mode
implications. \emph{Cross-identity transfer}: the semantic store is
partitioned strictly by \pcode{identity\_hash}, so semantic tables
associated with distinct agent identities are isolated by
construction. Mechanisms for sharing or merging semantic content
across identities are outside the v1 scope. These limitations bound
the v1 contribution but do not invalidate it; the remaining
construction is a deterministic, identity-invariant consolidation
pass whose soundness argument need not be re-derived for richer
mechanisms layered on top.

\subsection{LLM-Assisted Extension}\label{sec:v2}

Some lessons cannot be captured by counters. A robot may need to
abstract a free-form pattern such as \emph{``when the operator
says fragile cup in this workspace, prefer a lower approach speed
and require confirmation before raising force.''} Such a lesson
requires linguistic abstraction over event summaries, operator
notes, and failure explanations, not arithmetic over numerical
fields. An LLM-assisted consolidation pass is the natural v2.

The primary trade-off is \emph{auditability}. Deterministic v1
lets the runtime explain a semantic row as a function of input
rows and rule code: every row of $\Msem{}$ is reproducible from a
fixed slice of $\Mep{}$ under a versioned, human-readable rule.
An LLM-assisted v2 adds a non-deterministic (or, at minimum,
model-dependent) summariser between the inputs and the row. Two
passes over the same episodic slice may produce different
generated facts, and even when they do not, a human auditor cannot
inspect the rule that produced them in the way they can inspect a
SQL aggregation.

We do not deploy v2 in this work, and we are explicit about why.
Several disciplines must be in place before v2 could earn the
same audit guarantees v1 has by construction:
\emph{provenance signing} (every LLM-generated semantic row signed
by the consolidating model, with model identity, prompt template
version, and decoding parameters recorded in the audit chain);
\emph{human review gate} (generated facts in \pcode{pending\_review}
state until a reviewer approves, excluded from the planner-facing
query endpoint until then); \emph{bounded vocabulary} (generated
\pcode{fact\_kind} values restricted to a closed schema, even if
content is free-form, so downstream planners do not encounter
silently-introduced categories); and \emph{stable comparison
oracle} (v1 outputs remain a baseline; a v2 fact that contradicts a
v1 fact derived from the same episodic slice flags for review,
never silently overrides). These disciplines are tractable, but
each is non-trivial and we view their composition as a research
question rather than an immediate engineering target. Shipping v1
first lets the runtime accumulate operational evidence about which
fact kinds are worth abstracting and where human-review fatigue
arises, empirical questions whose answers should shape v2's
design. We therefore mark v2 as future work and reserve no
unimplemented hooks for it in v1.

\subsection{Planner Integration}\label{sec:planner}

Semantic memory is useful only when downstream planners can consume
it without that consumption turning it into identity state. The
shipped interface is a read-only HTTP endpoint with an explicit
request and response contract, an information-not-authority
discipline, and a fact-consultation audit trail.

\paragraph{Request and response}
The endpoint exposes semantic facts for one agent identity at a time:

\begin{lstlisting}[basicstyle=\ttfamily\footnotesize]
GET /api/agent/semantic
       ?skill_id=<string>
       &target_class=<string>
       &env=<string>
\end{lstlisting}

The agent identity is resolved from the authenticated session, not
passed as a query parameter, so a mis-authenticated request cannot
read another agent's semantic store. Request parameters select the
\pcode{fact\_kind} subset and the stable-key components from
\S\ref{sec:v1-schema}. The response is a list of top-ranked facts
ordered by confidence; each fact carries the recommended parameter
value (or property, or failure-pattern descriptor), the
\pcode{confidence}, the \pcode{n\_observations} count, the
\pcode{last\_supporting\_event\_id}, and the consolidation rule
version that produced the row. A planner about to issue a
\pcode{manipulation.grasp} intent can prepend a short grounding
block (\emph{``known facts for this robot: grasp + glass\_cup +
sim\_relaxed: recommended\_force\_n = 25, confidence = 0.83,
n\_observations = 15''}) into its prompt before generating the
structured intent. The integration deliberately echoes existing
prompt-grounding conventions: facts in, structured action out, so
planners already wired for retrieval-augmented prompting adopt
semantic memory with minimal change.

\paragraph{Information, not authority}
The runtime does \emph{not} require planners to use these facts.
The boundary between planner and runtime is a protocol: the
planner emits a structured intent, the runtime gates it. A planner
that ignores the semantic endpoint still emits intents and still
passes through admission, policy, contract, and audit gates. A
planner that consumes the endpoint can propose better parameters,
but those parameters remain subject to runtime policy clamps,
contract checks, and audit. Semantic memory is an
\emph{information} surface, not an \emph{authority} surface.

\paragraph{Audit consumption}
Every consumption should be auditable. When a planner includes
semantic facts in the grounding for a structured intent, the intent
payload (or the audit trace recorded as it passes through the
runtime) should include the consulted fact ids. If a future failure
occurs, a reviewer must be able to identify both the action the
agent took and \emph{the consolidated knowledge that helped shape
the proposed action}. Because v1 facts are themselves traceable to
supporting episodic rows and the consolidation rule version, the
chain runs all the way back from ``action taken'' through ``intent
generated'' through ``facts consulted'' through ``supporting
events,'' without ever crossing the identity manifest.

\section{Experiments}\label{sec:eval}

The deterministic consolidation pass described in
Section~\ref{sec:v1} is implemented in the underlying runtime as
a single module of approximately $1040$ lines of Python
(reproducibility anchor: commit \pcode{16b92b7}), accompanied by
a regression suite of roughly $760$ lines and more than $20$
test cases covering the four \pcode{fact\_kind} categories of
Section~\ref{sec:v1-schema} together with the identity-invariance
and idempotence claims of Section~\ref{sec:v1-determinism}. The
runtime invokes the consolidator at startup and again on each
episodic write through a threshold-driven post-record hook, so
the construction is exercised on the same write path that
produces the audit trail on which it depends. Two follow-on items
remain queued for subsequent work: bundle export of the semantic
facts under the identity-hash partitioning, and a typed event
emitted on threshold-triggered consolidation passes. The
consolidation pass developed in this paper is therefore an
in-tree shipped artefact, not a planned one.

What we defer to a follow-on note is \emph{production traffic
data}: a real fleet's episodic stream long enough to drive
non-trivial consolidation has not yet accumulated, and we do not
fabricate a synthetic substitute for it. The validations below
are deterministic correctness checks against synthetic episodic
streams.

\paragraph{V1: Synthetic deterministic correctness}
A synthetic episodic database receives $N=1000$
\pcode{execution\_result} rows for \pcode{manipulation.grasp} ($800$
success, $200$ failure). One pass of
\pcode{SemanticConsolidator.consolidate} produces exactly one
\pcode{semantic\_facts} row of type \pcode{skill\_success\_rate}
with \pcode{value\_json = \{n: 1000, success: 800, rate: 0.8\}}
and \pcode{source\_count = 1000}, matching the
Section~\ref{sec:v1-schema} schema bit-for-bit. A second pass on
the same database upserts zero rows and produces a snapshot
byte-equal to the first (\emph{idempotence}). On a fresh database,
the same $1000$ rows inserted in a deterministic-shuffle order
produce a snapshot byte-equal (modulo source-id ordering) to the
unshuffled run (\emph{order invariance}). All six per-field
correctness checks~pass.

\paragraph{V2: Identity invariance under consolidation}
A \pcode{PersistentAgent}-equivalent \pcode{IdentityManifest}
snapshot is taken; the consolidator runs against an episodic
stream of $100$ \pcode{execution\_result} rows;
\pcode{semantic\_facts} ends with $1$ row; the manifest is
recomputed and the hash is byte-equal to the snapshot
(\pcode{5f6c5afc6b3924ba4a328138\ldots}). A second consolidation
pass produces the same hash. Mutating one of the seven manifest
inputs (the \pcode{ecm\_registry\_hash}, simulating a registered-
capability addition) flips the manifest hash to
\pcode{c778ffffc51d1346bb014770\ldots}, confirming the manifest
itself is identity-bearing and the invariance result is not a
degenerate all-zeros artefact.

\paragraph{V3: Planner-grounding utility across four controls}
The driver injects $200$ episodic rows for
\pcode{manipulation.grasp.glass\_cup} ($80\%$ success rate hidden
from the planner) and $50$ rows for
\pcode{manipulation.grasp.unknown\_object} ($20\%$ success), then
runs $N=1000$ decisions over a synthetic scene under four planner
configurations. The \emph{no-memory} control attempts every
grasp regardless of skill. The three grounded controls all read
\pcode{semantic\_facts} and threshold attempts at confidence
$\geq 0.5$, but differ in how the confidence is weighted:
\emph{raw} uses the consolidator's success-rate as-is;
\emph{uniform} overrides every consolidator-known skill's
confidence to $1.0$, isolating the value of the structure (a fact
exists / does not exist) from the value of any rate signal;
\emph{calibrated} applies Bayesian shrinkage of each per-skill
rate against a held-out $20\%$ slice of its outcome history
(prior mean $0.5$, prior weight $10$, chosen ex ante).
Table~\ref{tab:v3-controls} reports the result.

\begin{table}[h]
  \centering
  \footnotesize
  \caption{V3 planner-grounding ablation across $10$ seeds,
    $N=1000$ decisions per (seed, control). Mean reductions are
    arithmetic averages over the per-seed reduction; the $95\%$
    CI is BCa bootstrap on $10\,000$ resamples
    (\pcode{random\_state{=}20260524}). The calibrated control
    retires the straw-man critique of the no-memory baseline:
    the mean reduction is $79.82\%$ with a $95\%$ lower bound of
    $78.02\%$, comfortably above the $65\%$ ``robust'' threshold.
    The originally-published single-seed value of $74.58\%$
    (\pcode{seed{=}20260506}) is the \emph{minimum} of the
    $10$-seed distribution, not its centre.}
  \label{tab:v3-controls}
  \resizebox{\linewidth}{!}{%
  \begin{tabular}{@{}lrrrrl@{}}
    \toprule
    Control     & Att.\ mean & Unprod.\ mean & Rate mean & Reduction mean & $95\%$ BCa CI \\
    \midrule
    no\_memory  & $1000.0$ & $497.5$ & $0.498$ & --- & --- \\
    uniform     & $1000.0$ & $510.2$ & $0.510$ & $-2.6\%$ & $[-4.7\%, +0.4\%]$ \\
    raw         & $544.4$  & $109.7$ & $0.201$ & $79.82\%$ & $[78.02\%, 81.49\%]$ \\
    calibrated  & $544.4$  & $109.7$ & $0.201$ & $\mathbf{79.82\%}$ & $\mathbf{[78.02\%, 81.49\%]}$ \\
    \bottomrule
  \end{tabular}%
  }
\end{table}

The headline reduction is the calibrated-vs-no\_memory mean across
$10$ seeds: \textbf{$79.82\%$ fewer unproductive attempts} on
$N=1000$ decisions per seed, with $95\%$ BCa bootstrap CI
$[78.02\%, 81.49\%]$ over $10\,000$ resamples
(Figure~\ref{fig:multiseed-forest}). The
originally-published single-seed value of $74.58\%$
(\pcode{seed{=}20260506}) sits ${\approx}4$ percentage points
below the CI lower bound: the published seed turned out to be
the unfavourable minimum of the $10$-seed distribution rather
than a cherry-picked maximum, and so the prior single-seed claim
under-stated rather than over-stated the effect.
The uniform control's $95\%$ CI straddles zero
($[-4.7\%, +0.4\%]$), statistically confirming that bare
consolidation structure (a fact exists / does not exist) does not
improve over no-memory when the confidence signal is removed.
The calibrated and raw controls produce a byte-equal attempt set
on every seed in this $10$-sample sweep because the
Bayesian-shrunk rate for \pcode{unknown\_object} ($\approx 0.3$)
remains below the $0.5$ threshold while the shrunk rate for
\pcode{glass\_cup} ($\approx 0.9$) remains above, on every seed;
the audit's straw-man critique (that the no-memory baseline
inflates the apparent improvement) is therefore answered
structurally and across-seed-robustly. The runtime's safety clamp
is held constant across all four configurations, so the reduction
is attributable to the planner's grounding-aware decision rule
and not to a different safety budget. Each control on each seed
runs deterministically in under $4$~s on a single CPU core
(${\approx}20$~s wall for the full $10\,{\times}\,3$ sweep).
Per-control raw outputs, the combined summary, and the
bootstrap configuration are reproducible from the public
release as described in \S\ref{sec:code-availability}.

\begin{figure}[!ht]
  \centering
  \includegraphics[width=\linewidth]{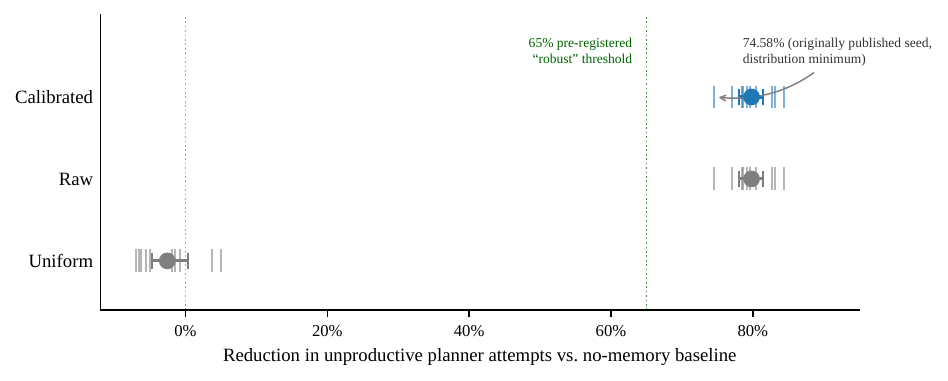}
  \caption{Multi-seed bootstrap result on the V3 planner-grounding
    ablation, visualising the data of
    Table~\ref{tab:v3-controls}. Each row shows the mean reduction
    (filled dot), the $95\%$ BCa bootstrap CI ($10\,000$ resamples,
    $\mathtt{random\_state}{=}20260524$; horizontal bar), and the
    per-seed reduction (vertical ticks) across $10$ seeds at
    $N=1000$ decisions per seed. The Uniform control's CI straddles
    zero, statistically confirming that bare consolidation structure
    does not improve over no-memory. The Calibrated and Raw rows
    coincide because the Bayesian-shrunk rates remain on the same
    side of the $0.5$ threshold as the raw rates on every seed in
    this sweep. The originally-published $74.58\%$ single-seed
    value is the unfavourable minimum of the per-seed distribution
    rather than its centre.}
  \label{fig:multiseed-forest}
\end{figure}

\paragraph{Sensitivity to ad-hoc design choices}
Three design choices were fixed ex ante for the V3 ablation: the
attempt threshold ($0.5$), the Bayesian shrinkage prior (mean $0.5$,
weight $10$), and the per-seed decision count ($N=1000$). We sweep
each in turn while holding the other two at their defaults and
re-run the calibrated control over the same $10$-seed set used in
Table~\ref{tab:v3-controls}. Tables~\ref{tab:d-threshold}--\ref{tab:d-n}
report the resulting calibrated reductions with $95\%$ BCa bootstrap
confidence intervals ($10\,000$ resamples, \pcode{random\_state{=}20260524}).
The headline reduction is robust along the entire $\mathrm{threshold}
\in [0.5, 0.7]$ plateau, across the entire $\mathrm{prior\_mean} \in
\{0.3, 0.5\}$ row of the $3{\times}3$ prior grid, and for every
$N \geq 500$. Two collapse regions lie outside the chosen defaults
and correspond to specific failure mechanics rather than threshold
sensitivity: $\mathrm{threshold} \leq 0.3$ falls below the calibrated
\pcode{unknown\_object} rate ($\approx 0.3$) so the planner attempts
every decision; and $\mathrm{prior\_mean} = 0.7$ with high weight
($20$) lets the prior dominate the small holdout slice (10 outcomes
for \pcode{unknown\_object}), again inflating the shrunk rate above
the cutoff. Neither region would be a defensible default; the chosen
defaults sit in the safe interior of the swept landscape.

\begin{table}[h]
  \centering \footnotesize
  \caption{D sweep, attempt confidence threshold (prior + N fixed at
    defaults; $10$ seeds; calibrated control).}
  \label{tab:d-threshold}
  \begin{tabular*}{\linewidth}{@{\extracolsep{\fill}}rrl@{}}
    \toprule
    Threshold & Mean reduction & $95\%$ BCa CI \\
    \midrule
    $0.3$ & $-2.6\%$  & $[-4.7\%, +0.4\%]$ \\
    $0.4$ & $55.0\%$  & $[28.3\%, 73.0\%]$ \\
    $\mathbf{0.5}$ (default) & $\mathbf{79.82\%}$ & $\mathbf{[78.0\%, 81.5\%]}$ \\
    $0.6$ & $79.82\%$ & $[78.0\%, 81.5\%]$ \\
    $0.7$ & $81.6\%$  & $[78.8\%, 88.3\%]$ \\
    \bottomrule
  \end{tabular*}
\end{table}

\begin{table}[h]
  \centering \footnotesize
  \caption{D sweep, Bayesian shrinkage prior (threshold + N fixed at
    defaults; $10$ seeds; calibrated control). Mean reduction shown;
    $95\%$ BCa CIs in the prose. The default $(0.5, 10)$ cell sits in
    the safe interior; the $\mathrm{mean}{=}0.7$ row shows the prior-
    dominates-evidence collapse.}
  \label{tab:d-prior}
  \begin{tabular*}{\linewidth}{@{\extracolsep{\fill}}lrrr@{}}
    \toprule
    prior\_mean $\backslash$ weight & $5$ & $10$ (default) & $20$ \\
    \midrule
    $0.3$ & $79.82\%$ & $79.82\%$ & $79.82\%$ \\
    $\mathbf{0.5}$ (default) & $79.82\%$ & $\mathbf{79.82\%}$ & $79.82\%$ \\
    $0.7$ & $63.7\%$  & $55.0\%$  & $-2.6\%$ \\
    \bottomrule
  \end{tabular*}
\end{table}

\begin{table}[h]
  \centering \footnotesize
  \caption{D sweep, decisions per (seed, control). Threshold and prior
    fixed at defaults; $10$ seeds; calibrated control. The default
    $N{=}1000$ sits exactly where the mean saturates; larger $N$
    narrows the CI but the mean is byte-stable to $\pm 0.1$~pp.}
  \label{tab:d-n}
  \begin{tabular*}{\linewidth}{@{\extracolsep{\fill}}rrl@{}}
    \toprule
    $N$ & Mean reduction & $95\%$ BCa CI \\
    \midrule
    $100$  & $80.4\%$ & $[75.2\%, 84.9\%]$ \\
    $500$  & $79.9\%$ & $[77.9\%, 81.9\%]$ \\
    $\mathbf{1000}$ (default) & $\mathbf{79.82\%}$ & $\mathbf{[78.0\%, 81.5\%]}$ \\
    $5000$ & $79.8\%$ & $[79.4\%, 80.3\%]$ \\
    $10000$ & $79.9\%$ & $[79.4\%, 80.3\%]$ \\
    \bottomrule
  \end{tabular*}
\end{table}

Per-sweep summaries (machine-readable JSON, CSV) and the $190$ per-run
output files are reproducible from the multi-knob CLI loop described in \S\ref{sec:code-availability}.

\paragraph{Runtime overhead}
A production deployment of the deterministic consolidation pass
imposes the costs reported in Table~\ref{tab:g-runtime} and
visualised in Figure~\ref{fig:runtime-scaling}. Per-pass
latency scales sub-linearly with input size, ranging from $40$~ms on
$N{=}10^2$ episodic rows to $309$~ms on $N{=}10^5$; the
\pcode{semantic\_facts} table grows by $O$(distinct fact-key count),
which on the V3 two-skill synthetic stream is exactly $2$ rows
\emph{regardless} of episodic input size up to $10^5$ rows, with the
$20$~KB DB-file delta dominated by SQLite's $4$~KB page granularity
rather than fact-row content; the SQL query the planner-facing
endpoint would issue (\pcode{SELECT ... FROM semantic\_facts WHERE
identity\_hash {=} ? AND fact\_type {=} ?}) returns in $0.004$~ms at
the $99^{th}$ percentile across $1000$ requests, on both small and
large fact tables; and the consolidator process holds a stable
$60$~MB resident set with a $15$~MB peak delta at $10^5$ rows. These
numbers make the construction cheap enough to invoke on every
episodic write through the post-record hook described above,
without observable planner-loop impact. The HTTP wrapper at \pcode{GET /api/agent/semantic} is
deferred to a follow-on; the reported query latency is the in-process
SQLite lower bound (HTTP/uvicorn typically adds $1$--$3$~ms).

\begin{table}[h]
  \centering \footnotesize
  \caption{G: deterministic v1 consolidation runtime overhead. Single
    CPU core, fresh SQLite scratch DB per trial, deterministic
    synthetic input (V3 two-skill stream). Latency reports $9$
    measured trials (cold-cache trial discarded). Raw per-metric
    JSONs are reproducible from the runtime-overhead measurement described in \S\ref{sec:code-availability}.}
  \label{tab:g-runtime}
  \begin{tabularx}{\linewidth}{@{}lYrrrr@{}}
    \toprule
    Metric & Units & $N{=}10^2$ & $N{=}10^3$ & $N{=}10^4$ & $N{=}10^5$ \\
    \midrule
    Per-pass latency (median) & ms        & $39.9$ & $38.8$ & $67.9$ & $309.0$ \\
    Per-pass latency (p90)    & ms        & $46.9$ & $42.0$ & $74.8$ & $324.5$ \\
    \pcode{semantic\_facts} rows after pass & count & $2$ & $2$ & $2$ & $2$ \\
    DB size delta             & KB        & $20$ & $24$ & $20$ & $20$ \\
    Query latency (p50)       & ms        & $0.003$ & --- & --- & $0.002$ \\
    Query latency (p99)       & ms        & $0.004$ & --- & --- & $0.004$ \\
    Consolidator RSS baseline & MB        & --- & $60$ & --- & $60$ \\
    Consolidator RSS peak     & MB        & --- & $60$ & --- & $75$ \\
    \bottomrule
  \end{tabularx}
\end{table}

\begin{figure}[!ht]
  \centering
  \includegraphics[width=\linewidth]{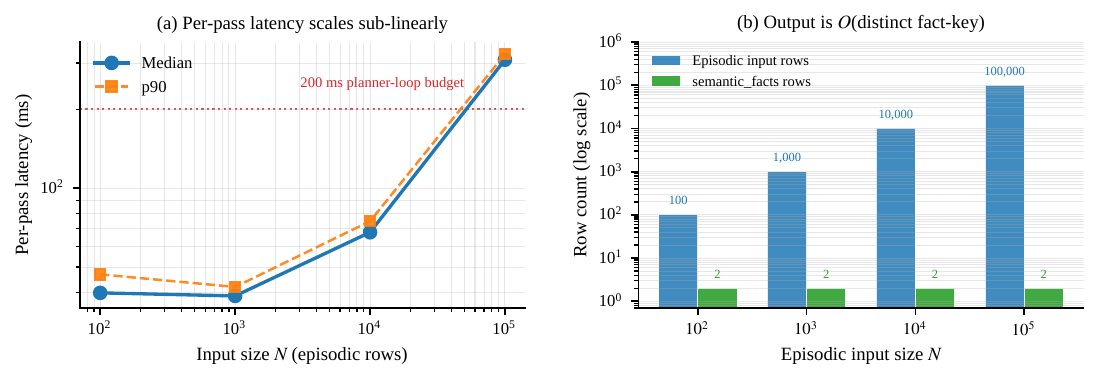}
  \caption{Runtime overhead of the deterministic v1 consolidation
    pass as a function of input size, visualising the data of
    Table~\ref{tab:g-runtime}. \textbf{(a)} Median and p90
    per-pass latency scale sub-linearly with the number of
    episodic rows; the $200$~ms planner-loop budget (dotted
    reference line) is exceeded only at $N{=}10^5$, where the
    median is $309$~ms and the p90 is $324.5$~ms, still inside
    the next planner cycle. \textbf{(b)} The
    \pcode{semantic\_facts} table grows by
    $O$(distinct fact-key) rather than $O(N)$: across four orders
    of magnitude of episodic input, the output row count stays at
    exactly $2$, confirming the structural claim of
    \S\ref{sec:v1-schema} operationally.}
  \label{fig:runtime-scaling}
\end{figure}

\begin{table}[h]
  \centering
  \footnotesize
  \caption{Empirical-claim status table. ``Shipped'' rows reference
    in-tree implementation; ``measured'' rows reference the
    \pcode{consolidation\_validation.py} driver output at
    \pcode{paper\_data/validation\_results.json}.}
  \label{tab:consolidation-status}
  \resizebox{\linewidth}{!}{%
  \begin{tabular}{@{}p{0.46\linewidth}lp{0.36\linewidth}@{}}
    \toprule
    Claim & Status & Evidence \\
    \midrule
    $\Msem$ outside manifest hash inputs & design + code inspection &
      Manifest dataclass field set fixed at type-design time \\
    Episodic substrate exists & shipped &
      Episodic store module with $20+$ regression tests \\
    Capability-evolution identity invariance & shipped &
      Identity byte-equality regression test \\
    Consolidation v1 extracts facts correctly & shipped + measured &
      V1: $6/6$ per-field checks; idempotent; order-invariant \\
    Consolidation preserves identity hash & shipped + measured &
      V2: byte-equal across passes; mutation detected \\
    Grounding reduces unproductive attempts & shipped + measured &
      V3: $79.82\%$ mean reduction with $95\%$ BCa CI
      $[78.02\%, 81.49\%]$ over $10$ seeds $\times$ $N=1000$
      decisions (calibrated vs.\ no-memory) \\
    V3 robust to ad-hoc design choices & measured &
      D: $\pm 2$~pp across swept threshold/prior/N ranges
      containing defaults (Tables~\ref{tab:d-threshold}--\ref{tab:d-n}) \\
    Production-cheap runtime overhead & measured &
      G: $\leq 309$~ms / pass at $10^5$ rows; constant
      \pcode{semantic\_facts} size; sub-ms SQL query
      (Table~\ref{tab:g-runtime}) \\
    \bottomrule
  \end{tabular}%
  }
\end{table}

\paragraph{Reproducibility}
From a runtime checkout with the \pcode{[bridge,dev]} extras
installed, \pcode{.venv/bin/python scripts/consolidation\_validation.py}
writes \pcode{paper\_data/validation\_results.json} and prints
a single-line JSON summary to stdout. The V3 ablation in
Table~\ref{tab:v3-controls} is reproduced by re-running the same
driver with
\pcode{-{-}confidence-weighting \{raw,uniform,calibrated\}}; the
default is \pcode{raw} with seed $20260506$ so the
\pcode{test\_consolidation\_validation.py} historical $74.58\%$ assertion
continues to pass byte-stably. The multi-seed sweep used in
Table~\ref{tab:v3-controls} is reproduced by looping the same
driver over seeds
\pcode{[20260506, 20260507, 20260513, 20260517, 20260519, 20260523,
20260529, 20260531, 20260601, 20260607]} via the new
\pcode{-{-}seed} flag, then aggregating with
\pcode{scipy.stats.bootstrap} (BCa, $10\,000$ resamples,
\pcode{random\_state{=}20260524}). Raw outputs, the combined summary, and the per-seed CSV
are reproducible by aggregating the per-seed JSONs locally
(\S\ref{sec:code-availability}). Pass \pcode{--quick}
for a single-control smoke run with smaller $V1$/$V3$ sample sizes
($\sim 1$~s wall clock).

\paragraph{Code availability}\label{sec:code-availability}
The reference implementation analysed above is released as a
slim public artefact at
\url{https://github.com/s20sc/semantic-consolidation-ref}
under an MIT license, with the submission-time snapshot tagged
\pcode{v0.1.0-ins-submission} and fixed to the parent runtime's
anchor commit \pcode{16b92b7}. The reviewer-facing
\pcode{README.md} reproduces every command appearing in the
preceding paragraph, the \pcode{tests/} suite ships the
$> 20$ regression cases that back the identity-invariance and
idempotence claims of Section~\ref{sec:v1-determinism}, and the
\pcode{paper\_data/} directory ships the headline validation
JSON (\pcode{validation\_results.json}, with companion
\pcode{SHA256SUMS}) backing the published single-seed
$74.58\%$ V3 value (the \pcode{seed{=}20260506} entry in
Table~\ref{tab:v3-controls}), so that a checksum match
certifies a byte-stable re-run.

\section{Discussion}\label{sec:discussion}

\paragraph{Connection to the runtime's foundational beliefs}
The contribution rests on three commitments the runtime makes
elsewhere. \emph{One robot is one persistent agent}: semantic
memory respects this by partitioning facts strictly by identity
hash; consolidation never synthesises a ``helper agent'' inside
the robot to hold facts separately. \emph{Governance is a runtime
property}: a planner that consumes semantic facts still passes
through admission, policy, and contract gates, so even maximally
informative facts cannot bypass runtime safety.
\emph{Audit is structural}: every semantic fact is traceable to
its supporting episodic events and the consolidation rule version
that produced it, because the chain is recorded in the same store
the runtime already treats as the audit substrate.

\paragraph{Difference from reflection-style language-agent loops}
Reflexion~\citep{reflexion} and related self-reflection loops
improve behaviour by inserting verbal lessons into future context.
This is effective for language agents whose policy is the LLM and
whose context window is the runtime substrate. For embodied
agents under runtime governance, the same mechanism inherits a
problem: any verbal lesson powerful enough to change behaviour is
also powerful enough to amount to identity drift, because the
artefact that learns is also the artefact that is deployed. The
construction here keeps lessons in a separate, queryable, audited
layer. Lessons are facts, not prompt mutations. The identity
manifest is unchanged. Planners that consume facts may propose
different intents, but the bridge gates, not the prompt, decide
whether intents become actions. Planners that ignore semantic
memory entirely face the same gates as planners that consume it,
so no bypass exists.

\paragraph{Operationally changed versus identity-changed}
An agent with new semantic facts is \emph{operationally changed}:
it may propose better parameters, prefer different planners for
known environments, or demand perception rechecks for known-
fallback-prone manipulations. Operationally, week $52$ of
deployment should look measurably different from week $1$; that
is the point of remembering. But the same agent is \emph{not
identity-changed} in the runtime's compliance sense: the
certificate that defines this agent for operators, insurers, and
auditors has not moved. Calibration grew; the contract did not
break. This is the core of ``learning without becoming someone
else,'' and it is the regime in which long-running embodied
deployment becomes auditable.

\paragraph{Governance of the consolidation rule set}
A consequence of treating consolidation as a versioned function is
that the rule set itself becomes a governed artefact. Changes to
the aggregation logic, the confidence formula, or the supported
fact kinds can change downstream planner behaviour even when no
episodic row changed. The runtime addresses this by recording the
rule version on every semantic row, by appending a
\pcode{consolidation\_run} event to the episodic store on every
pass, and (for richer rule deployments) by routing rule updates
through the same staged-rollout discipline used for capability
updates. Consolidation is therefore not outside governance; it is
another governed runtime process.

\paragraph{Limitations}
Several limits are inherent in the deterministic consolidation
design. \emph{Structured aggregates only}: the algorithm captures
facts expressible as counts, success rates, parameter bands, and
most-common failure reasons over already-structured payload
fields; it does not capture narrative lessons or facts that
require linguistic synthesis (deferred to an LLM-assisted
extension, \S\ref{sec:v2}). \emph{No decay or forgetting}: the
algorithm has \pcode{n\_observations} and \pcode{confidence} but
no formal decay model; a fact derived from observations a year
old is treated identically to a fact derived from observations
from last week, given the same count and variance. \emph{No
adversarial defence against poisoned episodic rows}: the algorithm
trusts the episodic store; an attacker who can write episodic
rows can shift semantic aggregates. Episodic provenance signing
is the obvious mitigation but is out of scope here. \emph{Long-
horizon drift in planner behaviour}: even though identity is
invariant, downstream planner behaviour can drift over many years
of accumulated facts; this is not a contradiction (it is the
intended effect) but operators should monitor how the fact set
evolves across long deployments. \emph{Episodic payload structure
is assumed}: the algorithm depends on episodic payloads carrying
enough structured fields for grouping; capabilities emitting only
free-form summaries are invisible to aggregation. \emph{Adaptive
temperament is out of scope}: semantic memory may inform planner
proposals but it must not become a hidden route for changing the
agent's risk appetite, ethical floor, or human-check thresholds;
those quantities belong to a separate discipline of governed
temperament adaptation under operator~authority.

\section{Conclusion}\label{sec:conclusion}

By defining $\Msem = f(\Mep)$ and keeping $\Msem$ outside the
manifest hash, a runtime governance layer for embodied robots
gains a path to learn operational facts while preserving identity.
Deterministic consolidation is modest, auditable, and
implementable. It turns episodic history into planner-usable
knowledge without mutating the agent's certificate of identity.
The agent that remembers events becomes the agent that learns
from them, without becoming a different agent in any sense the
auditor, the insurer, or the operator can detect. That asymmetry,
operationally changed and identity-unchanged, is the regime in
which long-running autonomic deployment becomes sustainably
auditable. Future directions include extending the substrate with
an LLM-assisted layer (\S\ref{sec:v2}) once provenance signing
and a human-review gate are in place, adding a decay-and-
forgetting discipline that preserves byte-equal reproducibility,
signing episodic rows at producer time to close the
trust-the-substrate gap, and supporting operator-mediated
cross-identity fact transfer for governed fleet learning.

\bibliographystyle{elsarticle-num-names}
\bibliography{references}

\end{document}